\ifcvprfinal\pagestyle{empty}\fi
\definecolor{turquoise}{cmyk}{0.65,0,0.1,0.3}
\definecolor{purple}{rgb}{0.65,0,0.65}
\definecolor{dark_green}{rgb}{0, 0.5, 0}
\definecolor{orange}{rgb}{0.8, 0.6, 0.2}
\definecolor{red}{rgb}{0.8, 0.2, 0.2}
\definecolor{blueish}{rgb}{0.0, 0.7, 1}
\definecolor{light_gray}{rgb}{0.7, 0.7, .7}
\definecolor{pink}{rgb}{1, 0, 1}
\definecolor{dark_red}{rgb}{0.5, 0, 0}
\newcommand{\hide}[1]{{}} %
\newcommand{\TODO}[1]{{\color{red}[TODO: #1]}}
\renewcommand{\TODO}[1]{{}}
\newcommand{\at}[1]{{\color{blue}#1}}
\newcommand{\AT}[1]{{\color{blueish}[AT: #1]}} %
\newcommand{\ky}[1]{{\color{orange}#1}}
\newcommand{\KY}[1]{{\bf \color{orange}[KY: #1]}}
\renewcommand{\at}[1]{{#1}}
\renewcommand{\AT}[1]{}
\renewcommand{\ky}[1]{{#1}}
\renewcommand{\KY}[1]{}
\newcommand{\Figure}[1]{Figure~\ref{fig:#1}}
\newcommand{\eq}[1]{(\ref{eq:#1})}
\renewcommand{\paragraph}[1]{\vspace{2\parskip}\textbf{#1.~}}
\DeclareMathAlphabet\mathbfcal{OMS}{cmsy}{b}{n}
\DeclareMathOperator*{\argmin}{arg\,min}
\newcommand{\point}{\mathbf{p}_k}
\newcommand{\points}{\mathbf{P}}
\newcommand{\x}{\mathbf{x}}
\newcommand{\image}{\mathcal{I}}
\newcommand{\cell}{\mathbf{c}}
\newcommand{\R}{\mathbb{R}}
\newcommand{\given}{|}
\newcommand{\object}{\mathcal{O}}
\newcommand{\scalars}{\boldsymbol{\Lambda}}
\newcommand{\scalar}{\boldsymbol{\lambda}_k}
\newcommand{\expect}[2]{\mathbb{E}_{#1}\left[ #2 \right]}
\begin{document}

\title{{\Huge VoronoiNet} \\ {\large General Functional Approximators with Local Support}}

\author{
Francis Williams \\
New York University
\and
Daniele Panozzo \\
New York University
\and
Kwang Moo Yi \\
University of Victoria
\and
Andrea Tagliasacchi \\
Google Brain
}

\maketitle

\begin{abstract}
\ky{Voronoi diagrams are highly compact representations that are used in various Graphics applications.
In this work, we show how to embed a differentiable version of it -- via a novel deep architecture -- into a generative deep network.
By doing so, we achieve a highly compact latent embedding that is able to provide much more detailed reconstructions, both in 2D and 3D, for various shapes. 
}
In this tech report, we introduce our representation and present a set of preliminary results comparing it with recently proposed implicit occupancy networks.
\end{abstract}

\newtheorem{lemma}{Lemma}
\section{Introduction}
\label{sec:intro}

\ky{
Choosing a shape representation is a fundamental problem for any geometric task.
Especially, with the advent of deep methods for geometry, it defines what operations are possible (e.g. convolution), what choices of architecture can be used (e.g. graph~\cite{graphcnn} or point networks~\cite{pointnet,pointnet++}), and what input modality (e.g. point clouds or images) can be used for training. 
Naturally, finding a proper differentiable representation for geometry has been of much research interest recently, with much focus on 3D~\cite{occnet,deepsdf,imnet,pifu,pointnet,atlasnet}.
}
A wide variety of 3D representations exist in the literature and are used for a variety of tasks from surface reconstruction~\cite{hoppe,sal,poisson}, shape completion~\cite{scan2mesh}, predicting shape from images~\cite{imnet}, semantic segmentation~\cite{pointnet} and many more.

At a high level, \ky{geometric representations} can be grouped \ky{into two:} \emph{explicit} representations, where the surface of an object is explicitly represented using for example, meshes~\cite{surface-networks}, parameterized patches~\cite{atlasnet, deep-geometri-prior} or point clouds \cite{pointnet, pointnet++}\ky{;} and \emph{implicit} methods, where a 3D object is defined by a scalar function in $\mathbb{R}^3$ (for example by defining the surface as a level set of this function)~\cite{occnet,imnet,pifu,sif, deepsdf, pcnn}.
\ky{With deep networks,} a recent trend is to use a neural network to represent the scalar function for a shape \cite{imnet, occnet, deepsdf, deep-geometri-prior}.
\ky{
Explicit representations have the benefit that they make surface extractions easy -- e.g. via Marching Cubes \cite{marchingcubes} -- while the implicit ones are easy to embed into a deep network with simple architectures.
}
\ky{Recently, \textit{hybrid} representations~\cite{cvxnet,bspnet} have been proposed to combine the best of both.}

\begin{figure}
\centering
\begin{overpic} 
[width=\linewidth]
{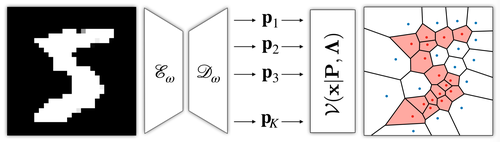}
\end{overpic}
\caption{
We propose a new differentiable \textit{implicit} representation of solid object based on Voronoi diagrams.
An encoder $\mathcal{E}_K$ generate a latent representation, which a decoder $\mathcal{D}_K$ converts into a collection of $K$ sites~$\{\mathbf{p}_k\}$.
Our layer receives these sites in input, and generate a function that can be evaluated at a point $\mathbf{x}$.
}
\label{fig:architecture}
\end{figure}

Of particular relevance to our \ky{work} is CvxNet~\cite{cvxnet}, which represent shapes as the intersection of a finite number of half spaces.
This representation is a universal approximator of convex domains \ky{-- similar to ours --} as well as non-convex ones via composition.
However, \ky{they are still \textit{implicit} when it comes to modelling overlap.
They \textit{train} to make their decompositions non-overlapping through an additional loss term and therefore have no guarantee that it would also be non-overlapping during inference.
While this can be of minor importance for reasoning tasks such as shape classification, it is problematic for others such as physical simulation.
}
\begin{figure*}[ht]
\centering
\begin{overpic} 
[width=\linewidth]
{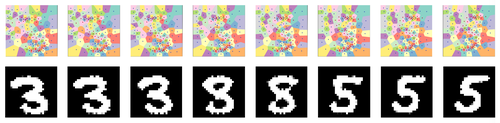}
\end{overpic}
\caption{
We encode the leftmost (3) and rightmost (5) digits in latent space and then linearly interpolate the corresponding latent codes.
}
\label{fig:interp}
\end{figure*}

Inspired by \cite{Ahuja:1985}, we propose a novel representation that 
\ky{guarantees non-overlapping convexes.
In other words, any network trained with our representation generates non-overlapping convexes \textit{by construction}.
}
\ky{We encode geometric}
information in the form of a point set~$\points{=}\{\point\}$, and generates the collection of convexes as the corresponding collection of their \textit{Voronoi cells} $\mathbf{C}{=}\{\cell_k\}$.
This representation is hybrid: the position of the seeds is \textit{explicit}, and extracting the surface only requires to compute their \textit{Voronoi Diagram} -- a task for which a number of robust and efficient software libraries exist~\cite{qhull}. 
Note that differently from iso-surface extraction, the Voronoi Diagram is unique  and resolution independent -- no parameter needs to be selected to compute it.
Interestingly for our purposes, it is possible to closely approximate the Voronoi Diagram with a differentiable \textit{implicit} function, which is ideal for training.

\section{Method}
\label{sec:method}

We follow the trend pioneered by~\cite{imnet} and seek \emph{functional} networks -- where the output of our network is a \emph{function} that can be queried at a desired location~$\x$.
Given the fixed vector $\scalars{=}(\lambda_k \in \{0, 1\})_k$, we express this function as the the piecewise constant function over the Voronoi diagram of the point set $\points = \{\point \in \mathbb{R}^d\}_k$ where the value of the function at points in the $k^\text{th}$ cell have value $\scalar$:
\begin{equation}
\mathcal{V}(\x \given \scalars, \points) = \scalars 
\left[
\argmin_k \{ \|\x -\point \|_2 \}
\right]
\label{eq:voronoi}
\end{equation}
where we assume that $\Sigma_k \scalar=|\scalars|/2$ -- in other words, we fix half of the sites to represent the ``inside'' (1) of a shape, and other half to represent the ``outside'' (0) of a shape.

Given an input $\image$ (e.g. image, point cloud, voxel grid) from a training dataset $\{\image_n\}$, an encoder $\mathcal{E}_\omega$ maps $\image$ to a latent code $z$ which a decoder $\mathcal{D}_\omega$ maps to the collection of Voronoi centers: $\points {=} \mathcal{D}_\omega(\mathcal{E}_\omega(\image))$. Figure~\ref{fig:architecture} illustrates this architecture visually.
The parameters of encoder and decoder are then trained by minimizing a reconstruction loss:
\begin{equation}\label{eq:rec_loss}
\mathcal{L}_\text{rec}(\omega) = \sum_{n}
\expect{\x \in [0,1]^D}{\| \object_n(\x) - \mathcal{V}(\x \given \mathcal{D}_\omega(\mathcal{E}_\omega(\image_n))) \|_2 }
\end{equation}
where $\object_n$ is the ground truth occupancy function corresponding to $\image_n$.
If we compare our representation to the one provided by ReLU functional networks~\cite{occnet,imnet,deepsdf}, we differ in a fundamental way: our learnable parameters have \textit{localized} support, while the transition boundaries of an MLP generally have a \textit{global} support.

\paragraph{Regularizers}
While the reconstruction loss $\mathcal{L}_\text{rec}$ lies at the core of our method, minimizing this loss is ill-posed. In particular, there exist an infinite space of solutions where voronoi cell agrees with the occupancy of the ground truth.
\at{To remedy this, we develop a number of regularizers that aid our training process.
Notably, these losses do not typically produce pareto-optimal variants of the trained network.}

\begin{lemma}\label{le:ill-posed}
Let $|\points|>6$ be a set of points such that half of $\point \in \mathbb{R}^2$
are labelled 1, and let $\object_n = \mathcal{V}(\x \given \scalars, \points)$
be the occupancy function of the associated Voronoi diagram.
Assume that there are three points labelled 1 so that the triangle they form is
contained in $\object_n$. Then, there exist an infinite number of minimizers
$(\points^*, \scalars^*)$ to \eqref{eq:rec_loss}.
\end{lemma}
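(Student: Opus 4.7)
The plan is to exhibit a continuous one-parameter family of distinct minimizers. Since $\mathcal{L}_\text{rec}$ is minimized exactly when the induced Voronoi occupancy agrees with $\object_n$ almost everywhere, it suffices to construct an infinite family of configurations $(\points^*, \scalars^*)$ whose Voronoi diagrams all produce the same label-$1$ region $R_1 := \object_n$.

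The first step uses the triangle assumption to locate an ``interior'' generator. By hypothesis, the triangle $T$ formed by the three label-$1$ points lies inside $R_1$, so every Voronoi cell meeting the interior of $T$ must itself be labelled $1$. Combined with $|\points|>6$ (which forces at least four label-$1$ generators in total), I claim there exists a label-$1$ generator $\mathbf{p}_\star$ whose closed Voronoi cell shares no edge with any label-$0$ cell -- equivalently, all of its Voronoi neighbors are also labelled $1$. Geometrically, the two-dimensional open region inside $T$ forces the Voronoi decomposition to include at least one label-$1$ cell ``buffered'' from $\partial R_1$ by surrounding label-$1$ cells.

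The second step shows that perturbing $\mathbf{p}_\star$ preserves $R_1$. Let $\delta>0$ be the Euclidean distance between the closure of $\mathbf{p}_\star$'s cell and the label-$0$ region, and fix any candidate $\mathbf{p}'_\star$ with $\|\mathbf{p}'_\star - \mathbf{p}_\star\| < \delta/2$. In the perturbed diagram the cell of $\mathbf{p}'_\star$ is still adjacent only to label-$1$ cells; cells far from $\mathbf{p}_\star$ are unchanged, while nearby label-$1$ cells merely redistribute territory among themselves. Because the union of the label-$1$ cells is invariant under such a redistribution, the perturbed occupancy still equals $\object_n$ and $(\mathbf{P}', \scalars)$ also achieves zero loss. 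Varying $\mathbf{p}'_\star$ continuously inside $B(\mathbf{p}_\star,\delta/2)$ therefore yields uncountably many distinct minimizers.

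The main obstacle is the existence claim of the first step. The intuition -- a two-dimensional triangle inside $R_1$ must protect some label-$1$ cell from $\partial R_1$ -- is geometrically clear, but a rigorous argument must track how the cells intersecting $T$ extend outside of $T$ and rule out configurations in which every such cell reaches all the way to $\partial R_1$. The cleanest route is likely a short case analysis: either one of the three triangle vertices $p_1,p_2,p_3$ already has exclusively label-$1$ Voronoi neighbors, or one of the ``extra'' label-$1$ generators guaranteed by $|\points|>6$ does; a continuity argument about the Voronoi cell containing the incenter of $T$ can then dispose of the remaining degenerate configurations.
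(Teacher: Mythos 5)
Your strategy is genuinely different from the paper's: you keep the number of sites fixed and perturb an existing label-$1$ generator, whereas the paper constructs new minimizers by \emph{adding} a freshly labelled site $q$ anywhere in the interior of the triangle, taking $(\points^*,\scalars^*)$ to be $(\points,\scalars)$ augmented with $(q,1)$ and letting the infinitude of choices of $q$ supply the infinitude of minimizers. The difference is fatal here, because the existence claim on which your argument hinges --- that some label-$1$ generator has exclusively label-$1$ Voronoi neighbors --- is not merely the hard step you defer to the end; it is false under the lemma's hypotheses. Take the eight sites $a_1=(-1.5,0)$, $a_2=(-0.5,\eta)$, $a_3=(0.5,0)$, $a_4=(1.5,\eta)$ labelled $1$ (with $\eta>0$ tiny, only to avoid collinearity) and $b_{1,2}=(0,\pm 5)$, $b_{3,4}=(\pm 5,0)$ labelled $0$. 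Every point of the sliver triangle $a_1a_2a_3$ lies within distance about $0.5$ of some $a_i$ and at distance at least $3.5$ from every $b_j$, so the triangle is contained in $\object_n$ and all hypotheses of the lemma hold; yet each label-$1$ cell is a roughly rectangular vertical slab reaching up to $y\approx 2.5$ and down to $y\approx -2.5$, where it shares an edge with the cells of $b_1$ and $b_2$. No label-$1$ cell is ``buffered,'' so there is no $\mathbf{p}_\star$ to perturb and your construction produces nothing. Your second step (small perturbations of a buffered site only redistribute territory among label-$1$ cells) is essentially sound modulo a continuity argument for Voronoi cells restricted to $[0,1]^2$, but it has nothing to act on. The hypothesis about the triangle is there to guarantee two-dimensional ``room'' for inserting a new site, not to guarantee an interior cell.

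To repair the argument you should switch to the paper's degree of freedom: insert a new site $q$ with label $1$ inside the triangle, so that the cell of $q$ is carved out of territory that was already labelled $1$ and the occupancy function is unchanged pointwise. Be aware that even this requires a little care that the paper's one-line justification glosses over: the cell of the inserted $q$ is not automatically contained in the triangle (for a very flat triangle it can extend far beyond it), so one must still argue that it never reaches a formerly label-$0$ region --- for instance by restricting where in the triangle $q$ is placed. But that route at least survives the configuration above, while a perturbation of existing sites does not: in that configuration, moving any single site strictly changes the occupancy on a set of positive measure.
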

\begin{proof}
Assume without loss of generality that $p_1, p_2, p_3$ are all labelled $1$
and the triangle they form is inside $\object_n$. Then let $q$ be any point
inside this triangle. Label $q$ with $1$, and define $(\points^*, \scalars^*)$
by adding this labeled point to $(\points, \scalars)$.
Then $\mathcal{V}(\x \given \scalars^*, \points^*)$ is a minimizer of 
\eqref{eq:rec_loss} for $\object_n$. In fact
$\mathcal{V}(\x \given \scalars^*, \points^*)=\object_n$ since the
$(\points^*, \scalars^*)$ produces the same function as $(\points, \scalars)$.
\end{proof}

\paragraph{Soft-Voronoi}
To differentiate through our Voronoi function, we generalize~\eq{voronoi} by replacing the \textit{argmin} with a \textit{soft-argmin}.
Given $D_k(\x){=}\|\x -\point \|_2$, we first define a vector~$\mathbf{W}$:
\begin{equation}
    \mathbf{W}_k(\x \given \points, \beta) = {e^{-\beta D_k(\x)}}/{\Sigma_k e^{-\beta D_k(\x)}}
\end{equation}
where $\beta {\in} \R^+$ is a temperature parameter and then formulate the soft version of~\eq{voronoi} as:
\begin{equation}
\mathcal{V}(\x \given \scalars, \points, \beta) =
\scalars \cdot \mathbf{W}(\x \given \points, \beta) 
\end{equation}
hence the temperature hyper-parameter $\beta$ controls the soft-argmin approximation to argmin.
In all experiments in the paper we set $\beta{=}10,000$.

\paragraph{Bounds loss} 
We naturally want to prevent our Voronoi sites from drifting far away from the data, which can be enforced in a smooth way via~\cite{bspnet}:
\begin{align}
\mathcal{L}_\text{bound}(\omega) = \sum_{\{\point\}} \sum_{d} \text{soft-bound}(\point[d]) 
\end{align}
where~$[d]$ extracts the $d^\text{th}$ dimension and $\text{soft-bound}(x) =\max({-}x,0) {+} \max(x{-}1,0)$.
We favor this to the use of output layers with bounded ranges as~\cite{bspnet} noting how these can suffer of vanishing gradients.

\begin{figure}
\centering
\includegraphics[width=0.49\columnwidth]{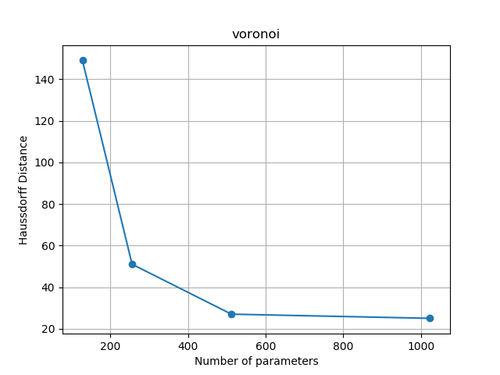}
\includegraphics[width=0.49\columnwidth]{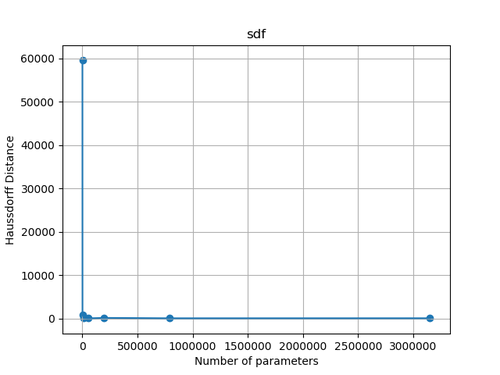}
\caption{Plot of the number of parameters (x-axis) vs. Hausdorff distance (y-axis) from the ground truth for the overfitted sphere example using (left) Voronoi and (right) OccNet. \TODO{DRAFT (and blurry figure)}
}
\label{fig:sphere_graphs}
\end{figure}
\begin{figure}
\centering
\begin{overpic} 
[width=\linewidth]
{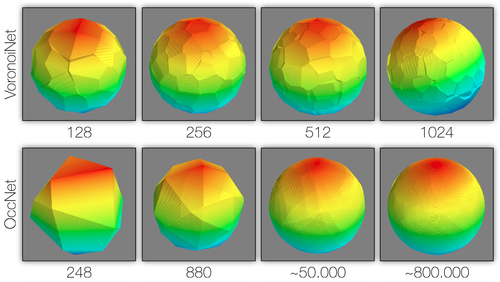}
\end{overpic}
\caption{
We compare the reconstruction power in terms of neural capacity of our VoronoiNet~(top)~vs. the one of traditional multi-layer perceptrons used in OccNet~\cite{occnet}~(bottom) on a simple 3D sphere -- note these are \textit{overfitting} results on a single example.
}
\label{fig:sphere}
\end{figure}

\paragraph{Signed distance Loss} 
As we prescribe the Voronoi (inside/outside) classes $\scalars$ rather than optimizing them, it is clear that if the $\lambda_k{=}1$, then the corresponding $\point$ should be \textit{inside}, or in other words $\mathcal{O}(\point){=}1$ (and symmetrically for $\lambda_k{=}0$).
Hence, we can define a loss that induces strong gradients towards the satisfaction of this property. 
With $\phi^+(\x)$ let us define the distance function to $\mathcal{O}$, and with $\phi^-(\x)$ the distance function to its complement space~$\bar{\mathcal{O}}(\x){=}1{-}\mathcal{O}(\x)$, and then define:
\begin{align}
\mathcal{L}_\text{sdf}(\omega) = 
\sum_k \scalar \phi^+(\point) + (1-\scalar) \phi^-(\point)
\end{align}

Note that all correct approximations $\points, \scalars$ of the ground truth occupancy lie in the null space of this loss. Thus, $\mathcal{L}_\text{sdf}$ simply accelerates training and does not prevent the network from finding a global minimum to the problem.  

\paragraph{Centroidal Voronoi loss}
To remedy the ill posedness (Lemma~\ref{le:ill-posed}) of the reconstruction loss \eqref{eq:rec_loss}, we add a loss that pushes each Voronoi point towards the centroid of its corresponding cell. A Voronoi diagram whose points lie at the centroid of its cells is known as \emph{centroidal}. Centroidal Voronoi tesselations have cells with roughly equal shape and have been used for many years in graphics to generate high quality tesselations of space \cite{isotropic,remeshing,voroopt}.
Asking the Voronoi diagram to be as centroidal as possible prevents points from clustering and introduces a unique reconstruction minimum.
Given $m$ Voronoi sites $\points$, we augment the sites with $\sqrt{m}$ points on the boundary with 0 (outside), we compute their Delaunay triangulation, and express its corresponding graph Laplacian operator via a sparse matrix~$\mathbf{L}$; a CVD-like loss can then be expressed by:
\begin{equation}
    \mathcal{L}_\text{cvd}(\omega) = \sum_k \| \mathbf{L} \point \|_2^2
\end{equation}

\begin{figure}
\centering
\begin{overpic} 
[width=\linewidth]
{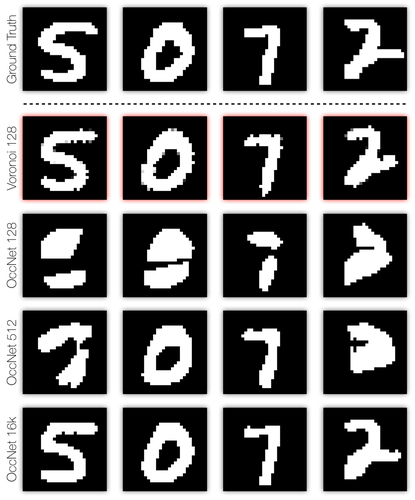}
\end{overpic}
\caption{
A qualitative comparisons of the representation power of different neural decoders as the number of degrees of freedom is increased.
}
\label{fig:occnet_cmp}
\end{figure}
\begin{figure*}
\centering
\begin{overpic}
[width=\linewidth]
{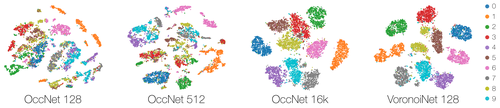}
\end{overpic}
\caption{
A tSNE embedding of our latent code on the MNIST dataset, where the ground truth class has been used to color their identity.
}
\label{fig:tsne}
\end{figure*}
\section{Experiments and Results}
\label{sec:results}
\paragraph{Overfitting a Sphere} We start by evaluating the reconstruction power of our network in terms of number of degrees of freedom used for a simple 3D dataset (\Figure{sphere}). We compare our method to the state of the art OccNet \cite{occnet} and DeepSDF \cite{deepsdf}.
Note that while both OccNet and DeepSDF guarantee $C^0$ continuity, the number of neurons necessary to generate reconstructions comparable to Voronoi networks in terms of Hausdorff distance to the ground truth is \textit{three} orders of magnitudes larger than with our approach. Figure~\ref{fig:sphere_graphs} plots the number of parameters for the function versus Haussdorff distance from the ground truth for all 3 methods. Figure~\ref{fig:sphere} shows the reconstructions for each method visually. 

\paragraph{MNIST} We evaluate our formulation on the MNIST dataset by treating the digits as an occupancy function in the~$[0,1]^2$ domain that needs to be predicted. We compare our method against OccNet \cite{occnet}. Both methods use a 4 layer fully connected encoder with 1024 neurons per layer. The encoder maps an MNIST digit image to a 16 dimensional latent variable. The decoder for our method is a 3 layer fully connected network with 1024 neurons per layer which maps the latent code to 128 Voronoi cells. The decoder for occnet has one hidden layer with a varying number of neurons. The decoder maps a latent code and point $\x$ to a probability of occupancy.

\paragraph{Embedding space}
We start by visualizing the tSNE embedding in~\Figure{tsne}.
Notice that while the method was trained in a self-supervised fashion, the latent space was able to organize the various digits by clearly separating the semantic classes.
It is interesting to note how part of the ``8'' embedding space is wedged between the ``3'' and the ``5'', reflecting the geometric similarity between these characters, and the required topological changes to interpolate between them.
To show this, we also visualize a path in the embedding space by encoding two digits, and then interpolating their latent codes; see~\Figure{interp}.
Notice how the topology of the ``9'' is first converted into the one of a ``5'', then into a ``6'' and finally smoothly deformed towards the target configuration.

We conclude our experiments by evaluating (on the test set) the auto-encoding performance on MNIST.
Note in this comparison we keep the capacity of the \textit{encoder} portion of our auto-encoder consistent across the various baselines.
In particular, we compare our Voronoi decoders to popular implicit pipelines that use a multi-layer perceptron as a (conditional) implicit decoder~\cite{occnet,deepsdf,imnet}.
Figure~\ref{fig:occnet_cmp} shows randomly drawn results illustrate how the Voronoi decoder allows for a significantly more compact representation of occupancy than occnet. Table~\ref{tab:occnet_cmp} compares statistics of voronoi reconstructions versus occnet on the test set with varying number of degrees of freedom.

\begin{table}
    \centering
    \begin{tabular}{c c c c}
    \toprule
         \textbf{Method} & \textbf{Mean} & \textbf{Std} & \textbf{Med} \\
    \midrule
         OccNet 128  & 83.803001 & 28.211296 & 85.692169 \\
         OccNet 512  & 76.165771 & 28.211296 & 75.422150 \\
         OccNet 16k  & 52.658348 & 14.332524 & 53.036644 \\
    \midrule
         Voronoi 128 & 57.996124 & 17.018425 & 58.294270 \\
    \bottomrule
    \end{tabular}
    \vspace{1em}
    \caption{Autoencoder statistics for different methods with different degrees of freedom. Note how Voronoi with 128 cells is comparable to OccNet with with 4 orders of magnitude more parameters. }
    \label{tab:occnet_cmp} 
\end{table}

\section{Conclusion \ky{and Future Work}}
\label{sec:future}
We introduced a new differentiable layer for solid geometry representation leveraging the Voronoi diagram.
Similarly to~\cite{occnet,deepsdf,imnet}, we expect our solution to  scale to the modeling of 3D objects with minor modifications. The challenge will be the identification of a random sampling tailored to evaluate the expectation of $\mathcal{L}_\text{rec}$.
While CvxNet~\cite{cvxnet} introduced the idea of hybrid representation learning, where training is performed in the implicit domain, and inference in the explicit domain (i.e.~generates meshes), our network can infer discrete geometry as the crust separating the inside/outside Voronoi cells, removing the need for the iso-surfacing post-processing (e.g.~marching cubes~\cite{marchingcubes}).

\ky{
Our work is early in its stage.
As future work, we plan to apply our method to higher dimensional data, to produce meshing of volume and not only surfaces, to analyze the benefit it brings in physical simulations.
}

\clearpage
{
  \small
  \bibliographystyle{ieee_fullname}
  \bibliography{macros,main}

\begin{thebibliography}{10}\itemsep=-1pt

\bibitem{Ahuja:1985}
Narendra Ahuja, Byong An, and Bruce Schachter.
\newblock Image representation using voronoi tessellation.
\newblock {\em Computer Vision, Graphics, and Image Processing}, 29(3):286 --
  295, 1985.

\bibitem{isotropic}
Pierre Alliez, Eric~Colin De~Verdire, Olivier Devillers, and Martin Isenburg.
\newblock Isotropic surface remeshing.
\newblock In {\em Shape Modeling International.}, 2003.

\bibitem{sal}
Matan Atzmon and Yaron Lipman.
\newblock Sal: Sign agnostic learning of shapes from raw data.
\newblock {\em arXiv:1911.10414}, 2019.

\bibitem{pcnn}
Matan Atzmon, Haggai Maron, and Yaron Lipman.
\newblock Point convolutional neural networks by extension operators.
\newblock {\em CoRR}, abs/1803.10091, 2018.

\bibitem{qhull}
C~Bradford Barber, David~P Dobkin, David~P Dobkin, and Hannu Huhdanpaa.
\newblock The quickhull algorithm for convex hulls.
\newblock {\em ACM Transactions on Mathematical Software (TOMS)}, 1996.

\bibitem{remeshing}
Mario Botsch and Leif Kobbelt.
\newblock A remeshing approach to multiresolution modeling.
\newblock In {\em Proceedings of the symposium on Geometry processing}, 2004.

\bibitem{bspnet}
Zhiqin Chen, Andrea Tagliasacchi, and Hao Zhang.
\newblock Bsp-net: Generating compact meshes via binary space partitioning.
\newblock {\em arXiv:1911.06971}, 2019.

\bibitem{imnet}
Zhiqin Chen and Hao Zhang.
\newblock Learning implicit fields for generative shape modeling.
\newblock {\em Proc. of Comp. Vision and Pattern Recognition (CVPR)}, 2019.

\bibitem{scan2mesh}
Angela Dai and Matthias Niezner.
\newblock Scan2mesh: From unstructured range scans to 3d meshes.
\newblock In {\em Proc. of Comp. Vision and Pattern Recognition (CVPR)}, 2019.

\bibitem{cvxnet}
Boyang Deng, Kyle Genova, Soroosh Yazdani, Sofien Bouaziz, Geoffrey Hinton, and
  Andrea Tagliasacchi.
\newblock Cvxnet: Learnable convex decomposition.
\newblock {\em arXiv:1909.05736}, 2019.

\bibitem{sif}
Kyle Genova, Forrester Cole, Daniel Vlasic, Aaron Sarna, William~T Freeman, and
  Thomas Funkhouser.
\newblock Learning shape templates with structured implicit functions.
\newblock {\em arXiv:1904.06447}, 2019.

\bibitem{atlasnet}
Thibault Groueix, Matthew Fisher, Vladimir~G Kim, Bryan~C Russell, and Mathieu
  Aubry.
\newblock Atlasnet: A papier-mache approach to learning 3d surface generation.
\newblock {\em arXiv:1802.05384}, 2018.

\bibitem{hoppe}
Hugues Hoppe, Tony DeRose, Tom Duchamp, John McDonald, and Werner Stuetzle.
\newblock {\em Surface reconstruction from unorganized points}.
\newblock 1992.

\bibitem{poisson}
Michael Kazhdan, Matthew Bolitho, and Hugues Hoppe.
\newblock Poisson surface reconstruction.
\newblock In {\em Proceedings of Symposium on Geometry processing}, 2006.

\bibitem{surface-networks}
Ilya Kostrikov, Joan Bruna, Daniele Panozzo, and Denis Zorin.
\newblock Surface networks.
\newblock {\em 2018 IEEE/CVF Conference on Computer Vision and Pattern
  Recognition}, pages 2540--2548, 2018.

\bibitem{voroopt}
Yang Liu, Wenping Wang, Bruno L{\'e}vy, Feng Sun, Dong-Ming Yan, Lin Lu, and
  Chenglei Yang.
\newblock On centroidal voronoi tessellation—energy smoothness and fast
  computation.
\newblock {\em ACM Trans. on Graphics (Proc. of SIGGRAPH)}, 2009.

\bibitem{marchingcubes}
William~E Lorensen and Harvey~E Cline.
\newblock Marching cubes: A high resolution 3d surface construction algorithm.
\newblock In {\em ACM Trans. on Graphics (Proc. of SIGGRAPH)}, 1987.

\bibitem{occnet}
Lars Mescheder, Michael Oechsle, Michael Niemeyer, Sebastian Nowozin, and
  Andreas Geiger.
\newblock Occupancy networks: Learning 3d reconstruction in function space.
\newblock {\em arXiv:1812.03828}, 2018.

\bibitem{graphcnn}
Federico Monti, Davide Boscaini, Jonathan Masci, Emanuele Rodola, Jan Svoboda,
  and Michael~M Bronstein.
\newblock Geometric deep learning on graphs and manifolds using mixture model
  cnns.
\newblock In {\em Proc. of Comp. Vision and Pattern Recognition (CVPR)}, 2017.

\bibitem{deepsdf}
Jeong~Joon Park, Peter Florence, Julian Straub, Richard Newcombe, and Steven
  Lovegrove.
\newblock Deepsdf: Learning continuous signed distance functions for shape
  representation.
\newblock {\em Proc. of Comp. Vision and Pattern Recognition (CVPR)}, 2019.

\bibitem{pointnet}
Charles~R Qi, Hao Su, Kaichun Mo, and Leonidas~J Guibas.
\newblock Pointnet: Deep learning on point sets for 3d classification and
  segmentation.
\newblock In {\em Proc. of Comp. Vision and Pattern Recognition (CVPR)}, 2017.

\bibitem{pointnet++}
Charles~Ruizhongtai Qi, Li Yi, Hao Su, and Leonidas~J. Guibas.
\newblock Pointnet++: Deep hierarchical feature learning on point sets in a
  metric space.
\newblock {\em CoRR}, abs/1706.02413, 2017.

\bibitem{pifu}
Shunsuke Saito, Zeng Huang, Ryota Natsume, Shigeo Morishima, Angjoo Kanazawa,
  and Hao Li.
\newblock Pifu: Pixel-aligned implicit function for high-resolution clothed
  human digitization.
\newblock {\em arXiv:1905.05172}, 2019.

\bibitem{deep-geometri-prior}
Francis Williams, Teseo Schneider, Cl{\'{a}}udio~T. Silva, Denis Zorin, Joan
  Bruna, and Daniele Panozzo.
\newblock Deep geometric prior for surface reconstruction.
\newblock {\em CoRR}, abs/1811.10943, 2018.

\end{thebibliography}
}
\end{document}